\newtheorem{theorem}{Theorem}[section]
\newtheorem{lemma}[theorem]{Lemma}
\newtheorem{proposition}[theorem]{Proposition}
\theoremstyle{definition}
\title{Monte Carlo approximation certificates for $k$-means clustering}
\author{Dustin~G.~Mixon\footnote{Department of Mathematics, The Ohio State University, Columbus, OH.} \qquad Soledad Villar\footnote{Center for Data Science, New York University, New York, NY and Simons Institute for Computing, UC Berkeley, Berkeley, CA. Correspondence to: \texttt{soledad.villar@nyu.edu}. 60 5th Ave, office 621. New York, NY 10011}}
\date{}
\providecommand{\keywords}[1]{\textbf{{Keywords}} #1}
\begin{document}
\maketitle

\begin{abstract}
Efficient algorithms for $k$-means clustering frequently converge to suboptimal partitions, and given a partition, it is difficult to detect $k$-means optimality.
In this paper, we develop an \emph{a posteriori} certifier of approximate optimality for $k$-means clustering.
The certifier is a sub-linear Monte Carlo algorithm based on Peng and Wei's semidefinite relaxation of $k$-means.
In particular, solving the relaxation for small random samples of the data set produces a high-confidence lower bound on the $k$-means objective, and being sub-linear, our algorithm is faster than $k$-means++ when the number of data points is large.
We illustrate the performance of our algorithm with both numerical experiments and a performance guarantee:
If the data points are drawn independently from any mixture of two Gaussians over $\mathbb{R}^m$ with identity covariance, then with probability $1-O(1/m)$, our $\operatorname{poly}(m)$-time algorithm produces a 3-approximation certificate with 99\% confidence.
\end{abstract}

\keywords{k-means clustering, semidefinite programming, approximation certificate, Gaussian mixture model, random matrices}

\section{Introduction}

Given a set of points in a metric space, the clustering problem asks for a partition of the set that minimizes a given dissimilarity objective function. 
When points lie in Euclidean space, the $k$-means objective is a popular choice for this dissimilarity function.
Unfortunately, the resulting $k$-means clustering problem is NP-hard~(see \citet{AloiseD:09, Awasthi:15}).
Despite its worst-case hardness, many algorithmic approaches have proven to be good enough for practical applications (e.g., $k$-means++ by~\citet{ArthurV:07}).
Since efficient algorithms for $k$-means frequently converge to local optima, we are interested in computing certificates of (approximate) optimality for clustering solutions. 

To this end, $k$-means++ enjoys an approximation guarantee for the average case, which can be converted to a Monte Carlo approximation certificate (see the next section for details).
In particular, there is a lower bound on the $k$-means value that can be expressed as the mean of a nonnegative random variable, and since instances of this random variable are easy to compute, they can be used to form a test statistic that refutes a null hypothesis in favor of a lower bound.
Unfortunately, the resulting lower bound is quite small in practice.

Recently, the authors developed a \textit{probably certifiably correct} (PCC) algorithm (\citet{Bandeira:16}) for $k$-means clustering (\citet{IguchiMPV:15}). \nocite{AwasthiB:15}
If the data satisfies certain hypotheses, then given the optimal partition, the PCC algorithm produces a certificate of optimality in quasi-linear time.
Unfortunately, these hypotheses require the semidefinite relaxation of the $k$-means problem from \citet{PengW:07} to be tight, which many real-world data sets fail to satisfy.
When the Peng--Wei relaxation is not tight, we can still round the solution of the $k$-means semidefinite program (SDP) as in~\citet*{MixonVW:17}, but this is extremely slow compared to $k$-means++.

Amazingly, while the $k$-means SDP is slow, we can still use it to produce a competitive sub-linear Monte Carlo approximation certificate.
In particular, running the SDP on a random sample of the data yields a nonnegative random value whose expectation is a lower bound on the $k$-means value of the complete data set.
In practice, the resulting \textit{a posteriori} lower bound is a substantial improvement over the $k$-means++ guarantee, and furthermore, when the data set is large, we can compute our lower bound faster than $k$-means++ can cluster.

In this paper, we first describe the natural Monte Carlo implementation of the $k$-means++ guarantee.
We then apply similar ideas to the $k$-means SDP of a random sample of the data set.
Next, we prove that our method provides near-optimal bounds for $k$-means clustering when the data is drawn from a mixture of Gaussians.
Figure~\ref{fig.lower_bound} presents numerical experiments that illustrate the performance of our method on real-world data. 
We conclude with a discussion of possible ways to improve our results in future work.

\section{Method}

Given $\{x_i\}_{i\in T}\subseteq\mathbb{R}^m$, the we seek the partition of $T$ that minimizes the $k$-means objective:
\begin{equation}
\tag{$T$-IP}
\label{eq.T-IP}
\text{minimize}
\quad
\frac{1}{|T|}\sum_{t\in[k]}\sum_{i\in C_t}\bigg\|x_i-\frac{1}{|C_t|}\sum_{j\in C_t}x_j\bigg\|^2
\quad
\text{subject to}
\quad
C_1\sqcup\cdots\sqcup C_k=T
\end{equation}
A common heuristic for this optimization problem is Lloyd's algorithm, also known as the $k$-means algorithm, which alternates between computing centroids of the current partition and re-assigning points to the nearest centroid.
This method is very fast, each iteration costing only $O(kmN)$ operations, where $N=|T|$.
Moreover, one may initialize with random proto-centroids in such a way that the corresponding partition of $T$ has random $k$-means value $W$ such that
\begin{equation}
\label{eq.kmeans++}
\operatorname{val}(T\text{-}\mathrm{IP})
\geq\frac{1}{8(\log k+2)}\cdot\mathbb{E}W.
\end{equation}
Lloyd's algorithm with this initialization is known as $k$-means++ (\citet{ArthurV:07}), and since the $k$-means value monotonically decreases with each iteration of Lloyd's algorithm, one may conclude that $k$-means++ is $O(\log k)$-competitive with the optimal $k$-means solution on average.

The present paper is concerned with quickly computing a lower bound on the optimal $k$-means value of a given problem instance.
To this end, one may apply \eqref{eq.kmeans++} to produce a Monte Carlo approximation certificate.
Specifically, observe that $V:=W/(8(\log k+2))$ satisfies three conditions:
\begin{itemize}
\item[(i)]
realizations of $V$ can be computed efficiently,
\item[(ii)]
$V\geq0$ almost surely, and
\item[(iii)]
$\operatorname{val}(T\text{-}\mathrm{IP})\geq\mathbb{E}V$.
\end{itemize}
While we do not have direct access to $\mathbb{E}V$, we may leverage these three features of $V$ to quickly conclude with statistical confidence that $\operatorname{val}(T\text{-}\mathrm{IP})>B$ for some $B$.

Specifically, we may perform a hypothesis test
\begin{align*}
&H_0:\mathbb{E}V\leq B\\
&H_1:\mathbb{E}V>B
\end{align*}
by drawing independent instances $V_1,\ldots V_\ell$ of $V$ and computing a test statistic:
\begin{equation}
\label{eq.test statistic}
T=\min_{i\in[\ell]}V_i.
\end{equation}
This test statistic can be computed efficiently due to (i).
Assuming $H_0$, then
\[
\mathbb{P}(T\geq t)
=\Big(\mathbb{P}(V_i\geq t)\Big)^\ell
\leq(\mathbb{E}V/t)^\ell
\leq(B/t)^\ell,
\]
where the first inequality applies Markov's inequality with (ii), whereas the final inequality is due to $H_0$.
As such, given $T$, we may reject $H_0$ with $p$-value $\eta=(B/T)^\ell$, i.e., with confidence $1-\eta$.
This in turn implies $\operatorname{val}(T\text{-}\mathrm{IP})>B$ by (iii). 

Unfortunately, applying this procedure with $V=W/(8(\log k+2))$ will often produce a loose lower bound on the output of $k$-means++.
For example, running $k$-means++ on the MNIST training set of 60,000 handwritten digits from~\citet{LeCunCB:online} with $k=10$ produces $k$-means values over 39, whereas $V$ tends to lie below 3.
How can we get a better \textit{a posteriori} approximation certificate?

In theory, we could run the Peng--Wei semidefinite relaxation:
\begin{equation}
\tag{$T$-SDP}
\label{eq.S-SDP}
\text{minimize}
\quad
\frac{1}{2|T|}\operatorname{tr}(D_TX)
\quad
\text{subject to}
\quad
X1=1,~\operatorname{tr}(X)=k,~X\geq0,~X\succeq0
\end{equation}
where $D_T$ denotes the $T\times T$ matrix whose $(i,j)$th entry is $\|x_i-x_j\|^2$.
Indeed, for each partition $C_1\sqcup\cdots\sqcup C_k=T$, the matrix $X=\sum_{t=1}^k\frac{1}{|C_t|}1_{C_t}1_{C_t}^\top$ is feasible in this semidefinite program with value equal to the partition's $k$-means value.
As such, $\operatorname{val}(T\text{-}\mathrm{IP})\geq\operatorname{val}(T\text{-}\mathrm{SDP})$, but computing $\operatorname{val}(T\text{-}\mathrm{SDP})$ is computationally prohibitive unless the data set is sufficiently small (say, $\leq1,000$).
Recently, the authors established that, if the clusters are sufficiently well behaved, then this SDP relaxation is tight, and furthermore, tightness can be established quickly from the optimal clustering by constructing a dual certificate (\citet{IguchiMPV:15}).
Sadly, many real-world data sets (such as MNIST) do not exhibit such nice behavior, and so new techniques are required. 

What follows is the main idea of this paper:
Draw a small subset $S\subseteq T$ at random and put $V:=\operatorname{val}(S\text{-}\mathrm{SDP})$.
Then $V$ satisfies (i)--(iii) above, and so we may compute the test statistic \eqref{eq.test statistic} to conclude a high-confidence lower bound on $\operatorname{val}(T\text{-}\mathrm{IP})$.
Indeed, (i) follows from $S$ being small, (ii) is obvious, whereas (iii) is less trivial:

\begin{theorem}
Pick $s\leq|T|$ and draw $S$ uniformly from $\binom{T}{s}$.
Then $\mathbb{E}\operatorname{val}(S\text{-}\mathrm{SDP})\leq\operatorname{val}(T\text{-}\mathrm{IP})$.
\end{theorem}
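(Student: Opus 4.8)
The plan is to take an optimal $k$-partition of the full data set, restrict it to the random sample $S$, turn the restricted partition (after a minor repair) into a feasible point of $(S\text{-}\mathrm{SDP})$, and then average the resulting bound over $S$ by linearity of expectation.

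Concretely, fix an optimal $k$-partition $C_1\sqcup\cdots\sqcup C_k=T$ of $(T\text{-}\mathrm{IP})$, let $\mu_t:=\frac{1}{|C_t|}\sum_{j\in C_t}x_j$ be the centroid of $C_t$, and let $\tau(i)\in[k]$ record the cluster containing $i$, so that $\operatorname{val}(T\text{-}\mathrm{IP})=\frac{1}{|T|}\sum_{i\in T}\|x_i-\mu_{\tau(i)}\|^2$. For a given sample $S$, consider the induced partition $\{C_t\cap S\}_{t\in[k]}$ of $S$. If some block $C_t\cap S$ is empty, refine the induced partition into $k$ nonempty parts, which only decreases its $k$-means value (splitting a block and re-centering each piece never raises the cost); this is possible since $s\ge k$, as is needed for $(S\text{-}\mathrm{SDP})$ to be feasible in the first place. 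By the feasible-point construction recalled in the excerpt, $\operatorname{val}(S\text{-}\mathrm{SDP})$ is at most the $k$-means value of this (repaired) partition of $S$, which is in turn at most the $k$-means value of the un-repaired induced partition, namely $\frac{1}{|S|}\sum_{t}\sum_{i\in C_t\cap S}\|x_i-\mu_t^S\|^2$, where $\mu_t^S$ is the centroid of $C_t\cap S$.

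Next I would replace the per-sample centroids by the global ones: since $\mu_t^S$ minimizes $c\mapsto\sum_{i\in C_t\cap S}\|x_i-c\|^2$, substituting $\mu_t$ for $\mu_t^S$ can only increase each inner sum, so $\operatorname{val}(S\text{-}\mathrm{SDP})\le\frac{1}{|S|}\sum_{t}\sum_{i\in C_t\cap S}\|x_i-\mu_t\|^2=\frac{1}{s}\sum_{i\in S}\|x_i-\mu_{\tau(i)}\|^2$. Now take the expectation over $S$, which is uniform on $\binom{T}{s}$: each $i\in T$ lies in $S$ with probability $s/|T|$, so $\mathbb{E}\big[\tfrac1s\sum_{i\in S}a_i\big]=\frac{1}{|T|}\sum_{i\in T}a_i$ for any weights $(a_i)_{i\in T}$. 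Applying this with $a_i=\|x_i-\mu_{\tau(i)}\|^2$ gives $\mathbb{E}\operatorname{val}(S\text{-}\mathrm{SDP})\le\frac{1}{|T|}\sum_{i\in T}\|x_i-\mu_{\tau(i)}\|^2=\operatorname{val}(T\text{-}\mathrm{IP})$.

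I do not expect a serious obstacle: the core of the argument is a one-line linearity-of-expectation identity, and the SDP enters only through the already-established bound $\operatorname{val}(\,\cdot\,\text{-}\mathrm{SDP})\le\operatorname{val}(\,\cdot\,\text{-}\mathrm{IP})$. The two points that need care are (a) comparing against the honest per-block centroid $\mu_t^S$ rather than trying to plug the global centroid $\mu_t$ into the SDP construction directly (the Peng--Wei feasible point is built from $\mu_t^S$), and (b) the minor bookkeeping when a sample misses a cluster of the optimal partition, handled by refining to $k$ nonempty parts.
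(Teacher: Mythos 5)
Your proposal is correct and matches the paper's proof essentially step for step: restrict the optimal partition of $T$ to $S$, bound the sample centroids by the global centroids, average over $S$ (your probability-$s/|T|$ inclusion argument is exactly the paper's $\binom{|T|-1}{s-1}/\binom{|T|}{s}$ computation), and finish with $\operatorname{val}(S\text{-}\mathrm{SDP})\leq\operatorname{val}(S\text{-}\mathrm{IP})$. Your explicit repair of empty blocks $C_t\cap S$ (needed so the induced partition genuinely yields an SDP-feasible point with $\operatorname{tr}(X)=k$, assuming $s\geq k$) is a small point the paper glosses over, and it is handled correctly.
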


\begin{proof}
Let $\{C_t\}_{t\in[k]}$ denote a minimizer of $T$-IP.
Then $\{C_t\cap S\}_{t\in [k]}$ is feasible in ($S$-IP), and so
\begin{align}
\mathbb{E}\operatorname{val}(S\text{-}\mathrm{IP})
\nonumber&\leq\mathbb{E}\left[\frac{1}{s}\sum_{t\in[k]}\sum_{i\in C_t\cap S}\bigg\|x_i-\frac{1}{|C_t\cap S|}\sum_{j\in C_t\cap S}x_j\bigg\|^2\right]\\
\label{eq.centroid bound}&\leq\frac{1}{s}\cdot\mathbb{E}\left[\sum_{t\in[k]}\sum_{i\in C_t\cap S}\bigg\|x_i-\frac{1}{|C_t|}\sum_{j\in C_t}x_j\bigg\|^2\right]\\
\nonumber&=\frac{1}{s}\cdot\frac{1}{\binom{|T|}{s}}\sum_{S\in\binom{T}{s}}\sum_{t\in[k]}\sum_{i\in C_t\cap S}\bigg\|x_i-\frac{1}{|C_t|}\sum_{j\in C_t}x_j\bigg\|^2\\
\nonumber&=\frac{1}{s}\cdot\frac{\binom{|T|-1}{s-1}}{\binom{|T|}{s}}\sum_{t\in[k]}\sum_{i\in C_t}\bigg\|x_i-\frac{1}{|C_t|}\sum_{j\in C_t}x_j\bigg\|^2\\
\nonumber&=\operatorname{val}(T\text{-}\mathrm{IP}),
\end{align}
where \eqref{eq.centroid bound} follows from the fact that the centroid of points $\{x_i\}_{i\in C_t\cap S}$ minimizes the sum of squared distances from those points.
Next, every feasible point of ($S$-IP) corresponds to a feasible point of ($S$-SDP) with the same value, and so $\operatorname{val}(S\text{-}\mathrm{SDP})\leq\operatorname{val}(S\text{-}\mathrm{IP})$.
The result then follows from taking the expectation and combining with the previous bound.
\end{proof}

Furthermore, this lower bound provides a substantial improvement over the k-means++ guarantee.
Case in point, for the MNIST training set, this choice of $V$ tends to be around 37 when $|S|=450$.
Figure~\ref{fig.lower_bound} illustrates additional numerical experiments along these lines.

We conclude this section with a theoretical guarantee that our method yields a 3-approximation certificate when the data comes from a mixture of Gaussians model.
The constant 3 is not tight; rather, we selected this constant to make the proof clean, which appears in Section~4.

\begin{theorem}
\label{thm.gmm}
Take $|T|\geq m\log m$, $s=m\log m$, $B=(m+3)/3$ and $\ell=7$.
Draw $\{x_i\}_{i\in T}\subseteq\mathbb{R}^m$ independently from an arbitrary mixture of two Gaussians, each with identity covariance.
Denote the random variable $V=\operatorname{val}(S\text{-}\mathrm{SDP})$ for $S$ drawn uniformly from $\binom{T}{s}$.
Then with probability $1-O(1/m)$, the following occur simultaneously:
\begin{itemize}
\item[(i)]
$\operatorname{val}(T\text{-}\mathrm{IP})\leq 3B$, and
\item[(ii)]
$\operatorname{val}(T\text{-}\mathrm{IP})>B$ follows from the test statistic~\eqref{eq.test statistic} with 99\% confidence.
\end{itemize}
\end{theorem}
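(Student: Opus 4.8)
The plan is to prove (i) and (ii) essentially independently and then combine them with a union bound. Throughout, condition on the latent component labels $a\colon T\to\{1,2\}$ of the data points and write $\mu_1,\mu_2$ for the two component means. For (i), the partition of $T$ induced by $a$ is feasible in~\eqref{eq.T-IP}, and replacing each component's optimal centroid by the corresponding $\mu_t$ only increases the objective, so $\operatorname{val}(T\text{-}\mathrm{IP})\le\frac1{|T|}\sum_{i\in T}\|x_i-\mu_{a(i)}\|^2$. Given $a$, the right-hand side is $\tfrac1{|T|}$ times a chi-squared random variable with $m|T|$ degrees of freedom (mean $m$); since $|T|\ge m\log m$, a standard upper-tail bound (e.g.\ Laurent--Massart) shows it exceeds $m+3=3B$ with probability $O(1/m)$. (The deterministic bound below, applied with $S=T$, also yields $\operatorname{val}(T\text{-}\mathrm{IP})\ge\operatorname{val}(T\text{-}\mathrm{SDP})>B$ with high probability, so the inequality certified in (ii) is in fact correct.)

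The crux of (ii) is a deterministic lower bound on the Peng--Wei SDP of an arbitrary finite set $\{x_i\}_{i\in S}\subseteq\mathbb R^m$. Writing $\mathbf1$ for the all-ones vector and $P=\frac1{|S|}\mathbf1\mathbf1^\top$, the constraints $X\mathbf1=\mathbf1$ and $X\succeq0$ force any feasible $X$ to split as $X=P+Y$ with $Y\succeq0$, $Y\mathbf1=0$ and $\operatorname{tr}Y=k-1$. Expanding $D_S$ through the Gram matrix of $\{x_i\}_{i\in S}$ and substituting gives $\frac1{2|S|}\operatorname{tr}(D_SX)=\frac1{|S|}\bigl(\operatorname{tr}(\Sigma_S)-\langle G_S,Y\rangle\bigr)$, where $\Sigma_S=\sum_{i\in S}(x_i-\bar x_S)(x_i-\bar x_S)^\top$ is the sample scatter matrix and $G_S$ is the Gram matrix of the centered points $\{x_i-\bar x_S\}_{i\in S}$, which has the same nonzero eigenvalues as $\Sigma_S$. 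Since $\langle G_S,Y\rangle\le(k-1)\lambda_{\max}(\Sigma_S)$ for $Y\succeq0$ with $\operatorname{tr}Y=k-1$, every feasible $X$ has objective value at least $\frac1{|S|}\bigl(\operatorname{tr}(\Sigma_S)-(k-1)\lambda_{\max}(\Sigma_S)\bigr)$, so
\[
\operatorname{val}(S\text{-}\mathrm{SDP})\ \ge\ \frac1{|S|}\Bigl(\operatorname{tr}(\Sigma_S)-(k-1)\,\lambda_{\max}(\Sigma_S)\Bigr),
\]
which for $k=2$ reads $\frac1{|S|}\bigl(\operatorname{tr}(\Sigma_S)-\lambda_{\max}(\Sigma_S)\bigr)$.

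It remains to estimate these spectral quantities when $|S|=s=m\log m$. Decompose $\Sigma_S=\Sigma_S^{\mathrm w}+\Sigma_S^{\mathrm b}$ into within- and between-component scatter relative to $a|_S$. With only two components, $\Sigma_S^{\mathrm b}$ is positive semidefinite of rank at most one, so $\operatorname{tr}(\Sigma_S^{\mathrm b})=\lambda_{\max}(\Sigma_S^{\mathrm b})$, and then Weyl's inequality makes the between-component contributions cancel: $\operatorname{tr}(\Sigma_S)-\lambda_{\max}(\Sigma_S)\ge\operatorname{tr}(\Sigma_S^{\mathrm w})-\lambda_{\max}(\Sigma_S^{\mathrm w})$. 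Conditioned on $a|_S$, the matrix $\Sigma_S^{\mathrm w}$ is the sum of the two groups' centered Gaussian scatter matrices, hence Wishart in $m$ dimensions with $r\ge s-2$ degrees of freedom; thus $\operatorname{tr}(\Sigma_S^{\mathrm w})$ is chi-squared with $mr$ degrees of freedom while $\lambda_{\max}(\Sigma_S^{\mathrm w})\le(\sqrt m+\sqrt r+t)^2$ with probability $1-e^{-t^2/2}$ (Davidson--Szarek). Taking $t\asymp\sqrt{\log m}$ and using $s=m\log m$, with probability $1-O(1/m)$ we obtain $\operatorname{tr}(\Sigma_S^{\mathrm w})\ge ms-O(s)$ and $\lambda_{\max}(\Sigma_S^{\mathrm w})=O(s)$, hence $\operatorname{val}(S\text{-}\mathrm{SDP})\ge m-O(1)$. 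Finally, recall from the discussion preceding the theorem that the test certifies $\operatorname{val}(T\text{-}\mathrm{IP})>B$ with confidence $1-(B/T)^\ell$, so (ii) holds exactly when $T=\min_{i\in[\ell]}V_i\ge 0.01^{-1/\ell}B$; with $\ell=7$ and $B=(m+3)/3$ this threshold equals $10^{2/7}(m+3)/3\le 0.65\,m+2$, which for all large $m$ is strictly below the bound $m-O(1)$ just established. Applying that bound to each of the $\ell=7$ independent samples and union bounding over the finitely many failure events per sample together with the event in (i) shows that (i) and (ii) hold simultaneously with probability $1-O(1/m)$.

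I expect the main obstacle to be the deterministic SDP lower bound of the second paragraph: extracting the clean spectral estimate $\operatorname{val}(S\text{-}\mathrm{SDP})\ge\frac1{|S|}(\operatorname{tr}(\Sigma_S)-(k-1)\lambda_{\max}(\Sigma_S))$ from the Peng--Wei constraints, and spotting the cancellation of the between-component scatter in the $k=2$ case. Given that, the remaining ingredients --- chi-squared concentration, the largest-eigenvalue bound for a Wishart matrix, and the numerology fixing $\ell=7$ and $B=(m+3)/3$ --- are routine.
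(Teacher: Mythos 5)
Your proof is correct, and for the heart of the argument --- the lower bound on $\operatorname{val}(S\text{-}\mathrm{SDP})$ --- you take a genuinely different route from the paper. The paper exploits the planted structure up front: writing the means as $\pm rv$, it decomposes the Gram matrix of the sampled points as $yy^\top+G^\top(I-vv^\top)G$, introduces the seminorm $\mathcal{F}(M)=\max_{X}|\operatorname{tr}(MX)|$ over the feasible set, and bounds it by $\min\{\|M\|_*,k\|M\|_{2\to2}\}$ via von Neumann's trace inequality, so that the rank-one ``signal'' piece contributes its nuclear norm $\|y\|^2$ and the noise piece contributes $k\|G\|_{2\to2}^2$. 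You instead prove a model-free deterministic inequality, $\operatorname{val}(S\text{-}\mathrm{SDP})\ge\frac1{|S|}(\operatorname{tr}\Sigma_S-(k-1)\lambda_{\max}(\Sigma_S))$, by splitting any feasible $X$ as $\frac1{|S|}\mathbf1\mathbf1^\top+Y$ with $Y\succeq0$, $Y\mathbf1=0$, $\operatorname{tr}Y=k-1$ (this is valid: $Y$ vanishes on $\operatorname{span}(\mathbf1)$ and agrees with $X$ on $\mathbf1^\perp$), and only then bring in the mixture model through the within/between scatter decomposition, where the rank-one between-scatter cancels against $\lambda_{\max}$. Your bound is in fact slightly sharper (a factor $k-1$ rather than $k$, and centering absorbs the mean direction automatically, where the paper explicitly projects out $v$), and it cleanly separates the deterministic SDP geometry from the probabilistic model; the paper's seminorm lemma, on the other hand, is reusable for other signal-plus-noise decompositions and connects to their earlier work. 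From there the two arguments coincide in substance: part (i) via the planted partition and a chi-squared upper tail, and the estimates via Laurent--Massart plus a Gaussian/Wishart operator-norm bound, followed by the same numerology ($T\gtrsim m-O(1)$ versus the rejection threshold $10^{2/7}B\approx0.65m$, union bound over the $\ell=7$ samples). The only nitpicks are presentational: you should note that the spectral-norm and trace bounds are uniform over the conditioning on the latent labels (including degenerate label patterns, whence $r\ge s-2$), and the parenthetical in your first paragraph about certifying a true inequality is not needed for (ii), which only asserts that the test rejects at the stated confidence.
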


Perhaps the most important part of Theorem~\ref{thm.gmm} is that it suffices to solve the $k$-means SDP with only $s=m\log m$, a considerable savings in runtime when $T$ is large.

\section{Discussion}

This paper provides a sub-linear Monte Carlo approximation certificate that performs well on real-world data.
Additional improvements would likely come from more information about the distribution of $\operatorname{val}(S\text{-}\mathrm{SDP})$.
For example, if we had an almost-sure upper bound on $\operatorname{val}(S\text{-}\mathrm{SDP})$, then Hoeffding's inequality would enable us to use the sample average as a test statistic instead of the minimum.
According to our experiments on the MNIST data set, the sample average of $\operatorname{val}(S\text{-}\mathrm{SDP})$ is within a fraction of being optimal (see Figure~\ref{fig.lower_bound}), and so we expect tighter lower bounds to hold accordingly.
We suspect that improvements of this sort could arise by first computing geometric properties of the data set; we leave this analysis for future work. 

\begin{figure}
\centering
\includegraphics[height=.4\textwidth]{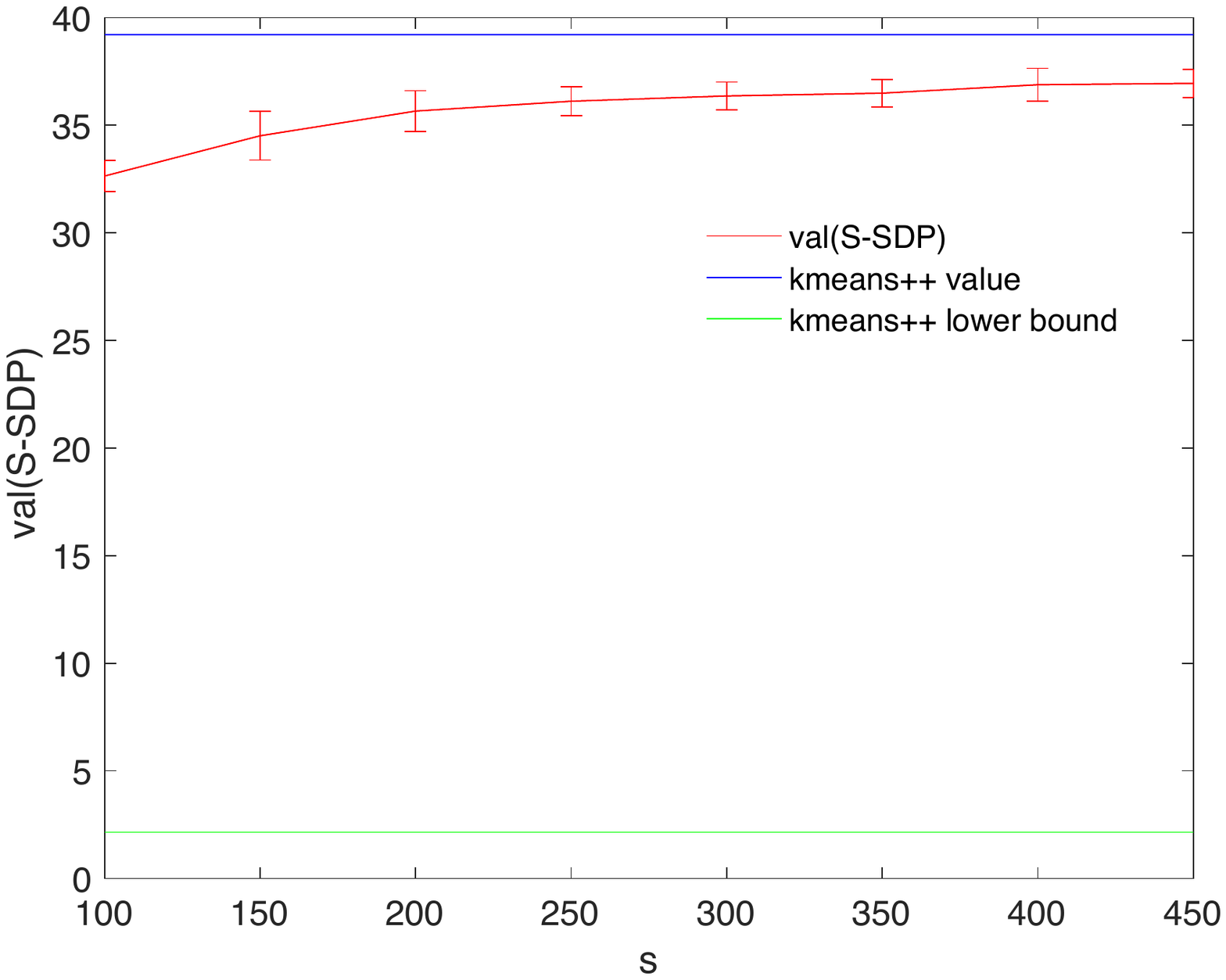}
\qquad\quad
\includegraphics[height=.4\textwidth]{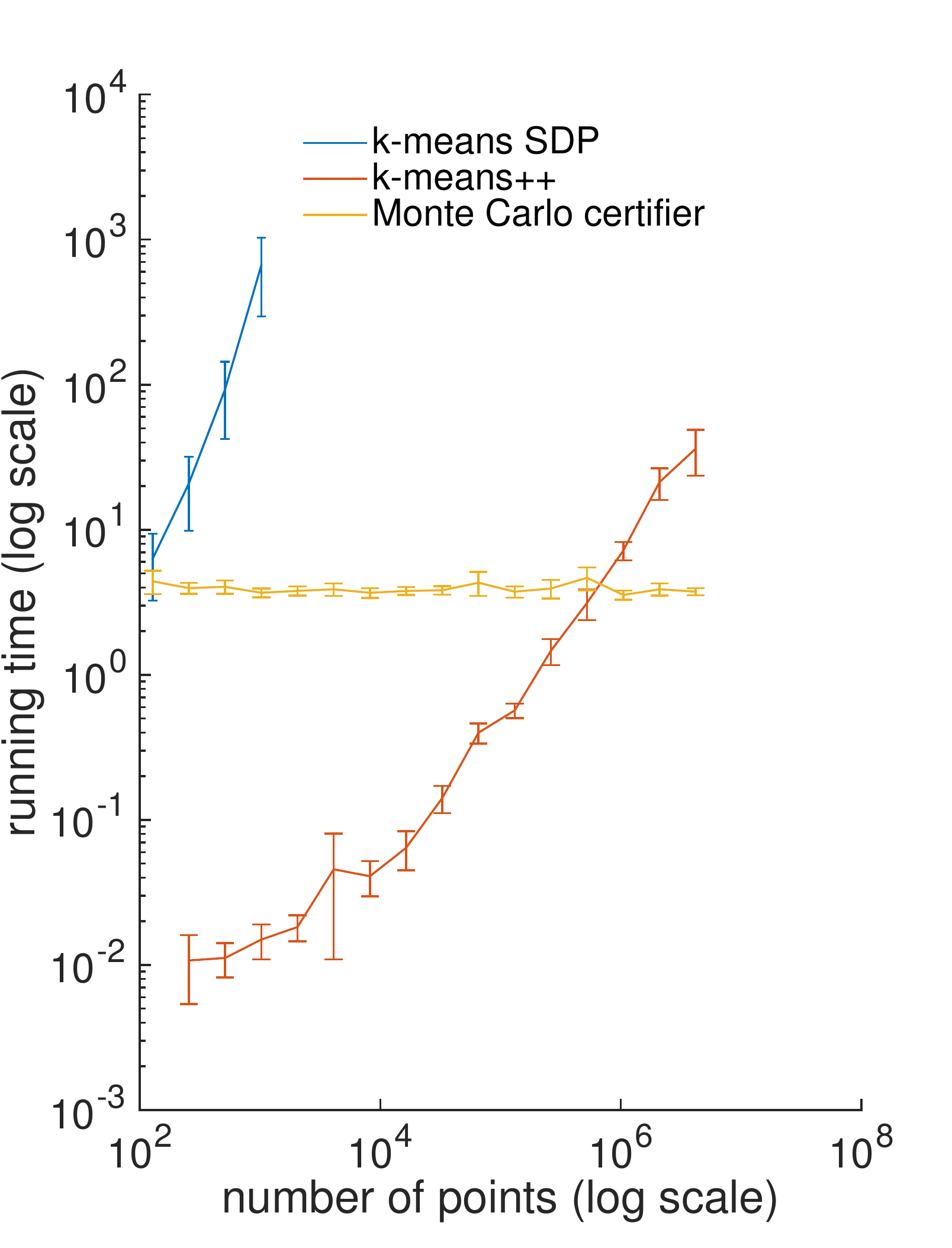}
\caption{\label{fig.lower_bound}
\textbf{(left)}
The $k$-means clustering value of all 60,000 handwritten digits from the MNIST training set (\citet{LeCunCB:online}).
The horizontal line at 39.2160 is an upper bound for the $k$-means value $\operatorname{val}(T\text{-}\mathrm{IP})$ obtained by running $k$-means++ with $k=10$, whereas the horizontal line at 2.1512 is the lower bound given by \eqref{eq.kmeans++}.
Between these lines, we plot the average of $\operatorname{val}(S\text{-}\mathrm{SDP})$ over 10 samples as a function of $|S|=s$.
Here, the error bars correspond to one standard deviation. 
\textbf{(right)}
Running times for $k$-means clustering algorithms as a function of number of data points.
Draw different numbers $N$ of points from a fixed mixture of two Gaussians in $\mathbb R^4$.
As expected, MATLAB's built-in implementation of $k$-means++ exhibits linear complexity in $N$, whereas the running time of the $k$-means SDP is much slower.
By contrast, the running time of our Monte Carlo certifier is independent of the number of points, and each trial produced a 2-approximation certificate with confidence ranging from $97.2\%$ to $99.7\%$ (here, we used $\ell=11$).
In fact, our certifier was faster than $k$-means++ when clustering over one million points. The SDPs were solved using SDPNAL+v0.5 (\citet{SDPNAL+}).
}
\end{figure}

\section{Proof of Theorem~\ref{thm.gmm}}

Put $N=|T|$.
Without loss of generality, we may assume the Gaussians are centered at $rv$ and $-rv$ for some $r\geq0$; here, $v$ denotes the first identity basis element.
Then for some $n$, we have $x_i=rv+g_i$ for $i\in\{1,\ldots,n\}$ and $x_i=-rv+g_i$ for $i\in\{n+1,\ldots,N\}$, where the $g_i$'s are independent with standard Gaussian entries.
Then the planted clustering $C_1=\{1,\ldots,n\}$, $C_2=\{n+1,\ldots,N\}$ is feasible in $(T\text{-}\mathrm{IP})$, and so
\begin{equation}
\label{eq.val ip est}
\operatorname{val}(T\text{-}\mathrm{IP})
\leq\frac{1}{N}\sum_{t\in[2]}\sum_{i\in C_t}\bigg\|x_i-\frac{1}{|C_t|}\sum_{j\in C_t}x_j\bigg\|^2
\leq\frac{1}{N}\sum_{i\in T}\|g_i\|^2,
\end{equation}
where the second inequality follows from the fact that the centroid of points $\{x_i\}_{i\in C_t}$ minimizes the sum of squared distances from those points.

To estimate $\operatorname{val}(S\text{-}\mathrm{SDP})$, let $\mu$ denote the $S\times1$ vector whose $i$th entry is $\|x_i\|^2$, and let $y$ denote the $S\times1$ vector whose $i$th entry is $\langle x_i,v\rangle$.
Then it is straightforward to verify that
\[
D_S
=\mu1^\top+1\mu^\top-2\Big(yy^\top+G^\top(I-vv^\top)G\Big),
\]
where $G$ is the $m\times S$ matrix whose $i$th column is $g_i$.
Put
\[
\mathcal{X}=\Big\{X:X1=1,\operatorname{tr}(X)=k,X\geq0,X\succeq0\Big\}.
\]
Then
\begin{equation}
\label{eq.est sdp val 1}
\operatorname{val}(S\text{-}\mathrm{SDP})
=\min_{X\in\mathcal{X}}\frac{1}{2s}\operatorname{tr}(D_SX)
=\frac{1}{s}1^\top\mu-\frac{1}{s}\max_{X\in\mathcal{X}}\operatorname{tr}\Big(\big(yy^\top+G^\top(I-vv^\top)G\big)X\Big).
\end{equation}
This motivates the following definition:
\begin{equation}
\label{eq.F}
\mathcal{F}(M)
:=
\max_{X\in\mathcal{X}}
|\operatorname{tr}(MX)|
\end{equation}
Then $\mathcal{F}$ is a seminorm on the space of real symmetric matrices (i.e., it satisfies all of the usual norm properties, though not necessarily non-degeneracy).
We can estimate this seminorm in terms of more familiar quantities (cf.\ Lemma~9 in~\citet*{MixonVW:17}):

\begin{lemma}
Let $\|\cdot\|_*$ and $\|\cdot\|_{2\rightarrow2}$ denote nuclear and spectral norms, respectively.
Then
\[
\mathcal{F}(M)\leq\min\Big\{\|M\|_*,k\|M\|_{2\rightarrow 2}\Big\}.
\]
\end{lemma}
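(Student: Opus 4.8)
The plan is to fix an arbitrary $X\in\mathcal{X}$, bound $|\operatorname{tr}(MX)|$ by each of the two target quantities separately using the standard trace inequalities for Schatten norms, and then take the supremum over $X\in\mathcal{X}$ followed by the minimum of the two bounds. The two inequalities I would invoke are $|\operatorname{tr}(MX)|\le\|M\|_{2\to2}\|X\|_*$ (spectral–nuclear duality) and $|\operatorname{tr}(MX)|\le\|M\|_*\|X\|_{2\to2}$ (its transpose), both valid for symmetric matrices.

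For the bound $\mathcal{F}(M)\le k\|M\|_{2\to2}$: since $X\succeq0$, its singular values coincide with its eigenvalues, so $\|X\|_*=\operatorname{tr}(X)=k$, and the first trace inequality gives $|\operatorname{tr}(MX)|\le k\|M\|_{2\to2}$ for every $X\in\mathcal{X}$. For the bound $\mathcal{F}(M)\le\|M\|_*$: by the second trace inequality it suffices to show $\|X\|_{2\to2}\le1$, and here I would use all three defining features of $\mathcal{X}$ together. Because $X$ is symmetric, $\|X\|_{2\to2}$ equals its spectral radius; because $X\succeq0$, all eigenvalues are nonnegative; and because $X\ge0$ entrywise with $X1=1$, the matrix is row-stochastic, so each coordinate of $Xw$ is a weighted average of coordinates of $w$, whence $\|Xw\|_\infty\le\|w\|_\infty$ and the spectral radius is at most $1$. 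Combining these, the eigenvalues of $X$ lie in $[0,1]$, so $\|X\|_{2\to2}\le1$ and $|\operatorname{tr}(MX)|\le\|M\|_*$.

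Taking the supremum over $X\in\mathcal{X}$ in each display and then the minimum yields $\mathcal{F}(M)\le\min\{\|M\|_*,k\|M\|_{2\to2}\}$, as claimed. I do not expect a genuine obstacle in this argument; the only step requiring a moment of care is the estimate $\|X\|_{2\to2}\le1$, which fails for a general entrywise-nonnegative stochastic matrix and genuinely relies on symmetry together with positive semidefiniteness (so that the spectral norm is controlled by the spectral radius, which in turn is controlled by row-stochasticity).
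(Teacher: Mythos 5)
Your proposal is correct and follows essentially the same route as the paper: the paper invokes Von Neumann's trace inequality together with the observations that the singular values of a feasible $X$ are at most $1$ (since $X$ is symmetric, PSD, and stochastic) and sum to $\operatorname{tr}(X)=k$, which is exactly the content of your two Schatten--H\"older bounds with $\|X\|_{2\to2}\le1$ and $\|X\|_*=k$. Your spelled-out argument that row-stochasticity plus symmetry and positive semidefiniteness force $\|X\|_{2\to2}\le1$ is precisely the justification the paper leaves implicit in the assertion $\beta_1=1$, so there is nothing to fix.
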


\begin{proof}
Von Neumann's trace inequality gives that
\[
|\operatorname{tr}(MX)|
\leq\sum_{i\in[s]}\alpha_i\beta_i,
\]
where $\alpha_1\geq\cdots\geq\alpha_N$ and $\beta_1\geq\cdots\geq\beta_N$ are the singular values of $M$ and $X$, respectively.
Since $X$ is feasible in \eqref{eq.F}, $X$ is necessarily stochastic, and so $\beta_1=1$.
From this we obtain the first bound:
\[
|\operatorname{tr}(MX)|
\leq\sum_{i\in[s]}\alpha_i
=\|M\|_*.
\]
Since we additionally have $\sum_{i\in[s]}\beta_i=\operatorname{tr}(X)=k$, we may also conclude
\[
|\operatorname{tr}(MX)|
\leq\sum_{i\in[k]}\alpha_i
\leq k\|M\|_{2\rightarrow2}.
\qedhere
\]
\end{proof}

This lemma then gives
\begin{align*}
\mathcal{F}\big(yy^\top+G^\top(I-vv^\top)G\big)
&\leq\mathcal{F}(yy^\top)+\mathcal{F}\big(G^\top(I-vv^\top)G\big)\\
&\leq\|yy^\top\|_*+k\big\|G^\top(I-vv^\top)G\big\|_{2\rightarrow2}
\leq\|y\|^2+k\|G^\top G\|_{2\rightarrow2}.
\end{align*}
We may combine with \eqref{eq.est sdp val 1} to get
\begin{align}
\operatorname{val}(S\text{-}\mathrm{SDP})
\nonumber&=\frac{1}{s}1^\top\mu-\frac{1}{s}\max_{X\in\mathcal{X}}\operatorname{tr}\Big(\big(yy^\top+G^\top(I-vv^\top)G\big)X\Big)\\
\nonumber&\geq\frac{1}{s}1^\top\mu-\frac{1}{s}\Big(\|y\|^2+k\|G\|_{2\rightarrow2}^2\Big)\\
\label{eq.final est}&=\frac{1}{s}\sum_{i\in[s]}\|(I-vv^\top)x_i\|^2-\frac{2}{s}\|G\|_{2\rightarrow2}^2.
\end{align}
To conclude the argument, we estimate \eqref{eq.val ip est} and \eqref{eq.final est} with the help of well-known deviation bounds:

\begin{proposition}[Lemma~1 in~\citet{LaurentM:00}]
\label{prop.chi}
Suppose $Q$ has chi-squared distribution with $n$ degrees of freedom.
Then
\[
\mathbb{P}\Big(Q\geq n+2\sqrt{nt}+2t\Big)\leq e^{-t}
\quad
\text{and}
\quad
\mathbb{P}\Big(Q\leq n-2\sqrt{nt}\Big)\leq e^{-t}
\qquad
\forall t\geq0.
\]
\end{proposition}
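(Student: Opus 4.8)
The plan is to use the Cramér–Chernoff method. Writing $Q=\sum_{i=1}^n Z_i^2$ with $Z_1,\dots,Z_n$ independent standard normal variables, a routine Gaussian integral gives the moment generating function $\mathbb{E}[e^{\lambda Q}]=(1-2\lambda)^{-n/2}$ for every $\lambda<1/2$. For the upper tail I would fix $\lambda\in(0,1/2)$ and apply Markov's inequality to $e^{\lambda Q}$ to obtain
\[
\mathbb{P}(Q\geq u)\leq e^{-\lambda u}(1-2\lambda)^{-n/2},
\]
and for the lower tail I would instead take $\lambda>0$ and apply Markov to $e^{-\lambda Q}$, giving $\mathbb{P}(Q\leq u)\leq e^{\lambda u}(1+2\lambda)^{-n/2}$.

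The second step is to optimize each bound over $\lambda$. In both cases the exponent is concave in $\lambda$ and the stationarity equation solves in closed form: for the upper tail the optimizer is $\lambda=(u-n)/(2u)$, which lies in $(0,1/2)$ precisely when $u>n$, and for the lower tail it is $\lambda=(n-u)/(2u)$, which is positive precisely when $0<u<n$. Substituting the optimizer and writing $x=u/n$, both optimized log-bounds collapse to the single expression $-\tfrac{n}{2}\big(x-1-\log x\big)$, so in each case it remains to verify that $\tfrac{n}{2}\big(x-1-\log x\big)\geq t$ for the prescribed threshold $u$.

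The final step verifies these two scalar inequalities, and this is where the specific shape of the thresholds earns its keep. For the upper tail, setting $b=2\sqrt{t/n}$ turns $u=n+2\sqrt{nt}+2t$ into $x=1+b+b^2/2$ with $t=nb^2/4$, and the desired inequality reduces to $1+b+b^2/2\leq e^b$, immediate from the power series of $e^b$. For the lower tail, setting $a=2\sqrt{t/n}$ turns $u=n-2\sqrt{nt}$ into $x=1-a$, and the inequality reduces to $-a-\log(1-a)\geq a^2/2$, which follows from $-\log(1-a)=a+a^2/2+a^3/3+\cdots$. I would also dispose of the degenerate regime $t\geq n/4$ in the lower tail separately: there $n-2\sqrt{nt}\leq 0$, so the event $\{Q\leq n-2\sqrt{nt}\}$ has probability zero and the claim holds trivially.

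The only genuinely nontrivial point — the main obstacle — is recognizing that the thresholds $n+2\sqrt{nt}+2t$ and $n-2\sqrt{nt}$ are reverse-engineered so that the Legendre-transform exponent telescopes into these one-line inequalities; once the substitution $x=u/n$ is made and the optimal $\lambda$ is plugged in, everything reduces to the elementary bounds $e^b\geq 1+b+b^2/2$ and $-\log(1-a)\geq a+a^2/2$, after which no further estimation is needed.
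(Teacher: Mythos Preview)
Your argument is correct and is essentially the standard Cram\'er--Chernoff derivation that Laurent and Massart themselves give. One small remark: the paper you are working from does not prove this proposition at all---it is quoted verbatim as Lemma~1 of \citet{LaurentM:00} and used as a black box in the proof of Theorem~\ref{thm.gmm}---so there is no ``paper's own proof'' to compare against. What you have written is precisely the intended proof from the cited source: compute the chi-squared MGF, optimize the Chernoff exponent to get $-\tfrac{n}{2}(x-1-\log x)$ with $x=u/n$, and then verify via the substitutions $b=2\sqrt{t/n}$ and $a=2\sqrt{t/n}$ that the stated thresholds reduce the claim to $e^b\geq 1+b+b^2/2$ and $-\log(1-a)\geq a+a^2/2$. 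Your handling of the degenerate lower-tail case $t\geq n/4$ is also correct. Nothing is missing.
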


\begin{proposition}[Corollary~5.35 in~\citet{Vershynin:11}]
\label{prop.gauss spec}
Let $G$ be an $m\times s$ matrix whose entries are independent standard Gaussian random variables.
Then $\|G\|_{2\rightarrow2}\leq\sqrt{s}+\sqrt{m}+t$ with probability $\geq1-2e^{-t^2/2}$.
\end{proposition}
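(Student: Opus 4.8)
The plan is to identify $\|G\|_{2\rightarrow2}$ with the supremum of a canonical Gaussian process and then to control this supremum with two standard tools: Gaussian concentration of measure for the fluctuation about the mean, and a Gaussian comparison inequality for the mean itself. Writing $S^{d-1}$ for the unit sphere in $\mathbb{R}^d$, I would start from the variational formula
\[
\|G\|_{2\rightarrow2}=\max_{u\in S^{m-1},\,w\in S^{s-1}}\langle Gw,u\rangle=\max_{u\in S^{m-1},\,w\in S^{s-1}}X_{u,w},
\qquad
X_{u,w}:=\sum_{i,j}G_{ij}u_iw_j,
\]
so that $\|G\|_{2\rightarrow2}=\sup X_{u,w}$ is the supremum of the centered Gaussian process $(X_{u,w})$ indexed by the product of spheres.

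For the fluctuation, I would observe that $G\mapsto\|G\|_{2\rightarrow2}$ is $1$-Lipschitz with respect to the Frobenius norm, since
\[
\big|\,\|G\|_{2\rightarrow2}-\|G'\|_{2\rightarrow2}\,\big|\leq\|G-G'\|_{2\rightarrow2}\leq\|G-G'\|_F.
\]
Regarding the $ms$ entries of $G$ as a single standard Gaussian vector, the Gaussian concentration inequality for Lipschitz functions then yields, for every $t\geq0$,
\[
\mathbb{P}\big(\|G\|_{2\rightarrow2}>\mathbb{E}\|G\|_{2\rightarrow2}+t\big)\leq e^{-t^2/2}.
\]
It therefore remains only to prove the mean bound $\mathbb{E}\|G\|_{2\rightarrow2}\leq\sqrt{s}+\sqrt{m}$.

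To bound the mean I would compare $(X_{u,w})$ with the simpler process $Y_{u,w}:=\langle g,u\rangle+\langle h,w\rangle$, where $g\in\mathbb{R}^m$ and $h\in\mathbb{R}^s$ are independent standard Gaussian vectors. A direct covariance computation gives $\mathbb{E}(X_{u,w}-X_{u',w'})^2=2-2\langle u,u'\rangle\langle w,w'\rangle$ and $\mathbb{E}(Y_{u,w}-Y_{u',w'})^2=\|u-u'\|^2+\|w-w'\|^2$; writing $a=\langle u,u'\rangle$ and $b=\langle w,w'\rangle$, the desired increment inequality reduces to $(1-a)(1-b)\geq0$, which holds because $a,b\leq1$ on the spheres. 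The Sudakov--Fernique inequality then gives $\mathbb{E}\sup X_{u,w}\leq\mathbb{E}\sup Y_{u,w}=\mathbb{E}\|g\|+\mathbb{E}\|h\|\leq\sqrt{m}+\sqrt{s}$, the last step by Jensen. Combining the three displays yields $\mathbb{P}(\|G\|_{2\rightarrow2}>\sqrt{s}+\sqrt{m}+t)\leq e^{-t^2/2}$; the factor $2$ in the stated probability is the cost of simultaneously retaining the matching lower-tail estimate on the least singular value in Corollary~5.35, which we do not need here.

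The step I expect to be most delicate is the comparison argument: one must set up the auxiliary process correctly and verify the increment condition, and it is exactly this Gaussian comparison (rather than a cruder union bound) that delivers the sharp leading constants $\sqrt{s}+\sqrt{m}$. An $\varepsilon$-net alternative --- covering both spheres, union-bounding the $N(0,1)$ tails of the fixed bilinear forms $X_{u,w}$, and paying a factor $(1-2\varepsilon)^{-1}$ to pass from the net back to the sphere --- is more elementary but only produces a bound of the form $C(\sqrt{s}+\sqrt{m})+t$ with $C>1$, so it cannot recover the constant-$1$ coefficients asserted in the proposition.
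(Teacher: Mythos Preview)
Your argument is correct and is in fact the standard proof of this estimate: Gaussian concentration for the $1$-Lipschitz map $G\mapsto\|G\|_{2\to2}$ controls the deviation about the mean, and the Sudakov--Fernique (Slepian/Gordon) comparison with the decoupled process $Y_{u,w}=\langle g,u\rangle+\langle h,w\rangle$ gives the sharp mean bound $\mathbb{E}\|G\|_{2\to2}\leq\sqrt{m}+\sqrt{s}$. The increment verification $(1-a)(1-b)\geq0$ is the right one, and your remark that the constant $2$ in the probability accounts for the companion lower-tail bound on $\sigma_{\min}(G)$ in Vershynin's corollary is accurate.

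There is nothing to compare against in the paper itself: the authors do not prove this proposition but simply quote it as Corollary~5.35 of Vershynin's survey and use it as a black box in estimating \eqref{eq.final est}. Your write-up essentially reconstructs Vershynin's own proof (he uses Gordon's inequality, which packages the comparison for both tails at once, but the upper-tail half is exactly your Sudakov--Fernique step). One minor stylistic point: you could state explicitly that Sudakov--Fernique applies because the index set $S^{m-1}\times S^{s-1}$ is compact and the process has continuous sample paths, so the supremum is attained and measurable; this is routine but worth a clause.
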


The right-hand side of \eqref{eq.val ip est} is chi-squared distributed with $mN$ degrees of freedom.
As such, taking $t=N/m\geq \log m$ in the first part of Proposition~\ref{prop.chi} gives
\[
\operatorname{val}(T\text{-}\mathrm{IP})
\leq m+2+2/m
\leq3B
\]
with probability $1-O(1/m)$.
Next, the sum in \eqref{eq.final est} is chi-squared distributed with $s(m-1)$ degrees of freedom.
As such, take $t=s/(m-1)\geq \log m$ in the second part of Proposition~\ref{prop.chi} and $t=\sqrt{2s/m}$ in Proposition~\ref{prop.gauss spec} to get
\[
\operatorname{val}(S\text{-}\mathrm{SDP})
=\frac{1}{s}\sum_{i\in[s]}\|(I-vv^\top)x_i\|^2-\frac{2}{s}\|G\|_{2\rightarrow2}^2
\geq m-5-O(\tfrac{1}{\sqrt{\log m}})
\geq 2B
\]
with probability $1-O(1/m)$.
Apply the union bound over all $\ell=7$ instances of $\operatorname{val}(S\text{-}\mathrm{SDP})$ to get $T\geq2B$ with probability $1-O(1/m)$.
In this event, we may reject $H_0$ with $p$-value
\[
(B/T)^\ell\leq(1/2)^7<0.01,
\]
as claimed.

\section*{Acknowledgments}
This research was conducted while S.V.\ was a Research Fellow at the Simons Institute for Computing, University of California at Berkeley. 
D.G.M.\ was partially supported by AFOSR F4FGA06060J007 and AFOSR Young Investigator Research Program award F4FGA06088J001.
S.V.\ was partially supported by the Simons Algorithms and Geometry (A$\&$G) Think Tank. 
The views expressed in this article are those of the authors and do not reflect the official policy or position of the United States Air Force, Department of Defense, or the U.S.\ Government.

\smallskip

The authors dedicate this work in memory of Michael Cohen.

\bibliographystyle{plainnat}
\bibliography{literature_review}

\end{document}